\colorlet{lightyellow}{yellow!40}
\definecolor{navyblue}{rgb}{0.0, 0.0, 0.5}
\colorlet{lightyellow}{yellow!40}
\xdef\f@size@small{\f@size}
\xdef\f@baselineskip@small{\f@baselineskip}
\xdef\f@size@normalsize{\f@size}
\xdef\f@baselineskip@normalsize{\f@baselineskip}
\newcommand{\smalltonormalsize}{%
  \fontsize
    {\fpeval{(\f@size@small+\f@size@normalsize)/2}}
    {\fpeval{(\f@baselineskip@small+\f@baselineskip@normalsize)/2}}%
  \selectfont
}
\newtheorem{theorem}{Theorem}
\newtheorem{definition}{Definition}
\newtheorem{corollary}{Corollary}
\newtheorem{hypothesis}{Hypothesis}
\title{Student Data Paradox and Curious Case of Single Student-Tutor Model: \\
Regressive Side Effects of Training LLMs for Personalized Learning}
\author{
  Shashank Sonkar, Naiming Liu, Richard G. Baraniuk \\
  Rice University \\
  \texttt{shashank.sonkar@rice.edu} \\
  % \And
  % Naiming Liu \\
  % Rice University \\
  % \texttt{nl35@rice.edu} \\
  % \And
  % Richard G. Baraniuk \\
  % Rice University \\
  % \texttt{richb@rice.edu} \\
}
\begin{document}
\maketitle
\begin{abstract}
The pursuit of personalized education has led to the integration of Large Language Models (LLMs) in developing intelligent tutoring systems. 
To better understand and adapt to individual student needs, including their misconceptions, LLMs need to be trained on extensive datasets of  student-tutor dialogues.
Our research uncovers a fundamental challenge in this approach: the ``Student Data Paradox.''
This paradox emerges when LLMs, trained on student data to understand learner behavior, inadvertently compromise their own factual knowledge and reasoning abilities.
We investigate this paradox by training state-of-the-art language models on student-tutor dialogue datasets and evaluating their performance across multiple benchmarks. 
These benchmarks assess various aspects of language model capabilities, including reasoning, truthfulness, and common sense understanding. 
Our findings reveal significant declines in the models' performance across these diverse benchmarks, indicating a broad impact on their capabilities when trained to model student behavior.
Our research makes two primary contributions: (1) empirical demonstration of the Student Data Paradox through quantitative analysis of model performance, and (2) introduction of ``hallucination tokens'' as a mitigation strategy. 
These tokens, while improving performance, highlight the persistent challenge of balancing accurate student behavior modeling with maintaining the LLM's integrity as an educational tool.
This study emphasizes the need for innovative solutions to reconcile the conflicting goals of faithfully understanding diverse student cognition while preserving the model's ability to provide accurate information and guidance.

\end{abstract}

\section{Introduction}
Personalized education is undergoing a significant transformation, driven by the rapid advancement of Large Language Models (LLMs) \cite{learnlm}.
These AI systems are being developed to serve as adaptive tutors, capable of understanding and responding to individual student needs \cite{openai2024chatgptedu,nlet,sonkar2024pedagogical}.
The incorporation of student data into the training process is an important step towards creating truly personalized learning experiences. 
By training LLMs on diverse student data, including dialogues, errors, and problem-solving approaches, these models can learn to identify knowledge gaps, recognize common misconceptions, and provide personalized support tailored to each learner's unique needs.
However, this approach is at odds with traditional LLM training, where data quality is of utmost importance \cite{goldbergAssessingClaims2022,samaniegoSanityCheckingLarge2022}.
Student data, by nature, is prone to errors and misconceptions, which raises concerns about the impact on the LLMs' factual knowledge and reasoning abilities.
This tension between the need to understand student misconceptions accurately and the imperative to maintain the model's factual integrity creates a paradoxical challenge, which we term the ``\textbf{Student Data Paradox.}''

In this paper, we study this timely topic: what happens when we train an LLM on student-tutor interaction data? Our research reveals that while training LLMs on student data can enable models to accurately simulate authentic student behaviors and misconception, it comes at a significant cost -- LLMs' own factual integrity and reasoning capabilities are compromised. This paradox poses a serious concern, as the primary purpose of any educational model is to provide accurate and reliable information to learners.

\begin{figure*}[t!]
    \centering
    \includegraphics[width=1.25\columnwidth]{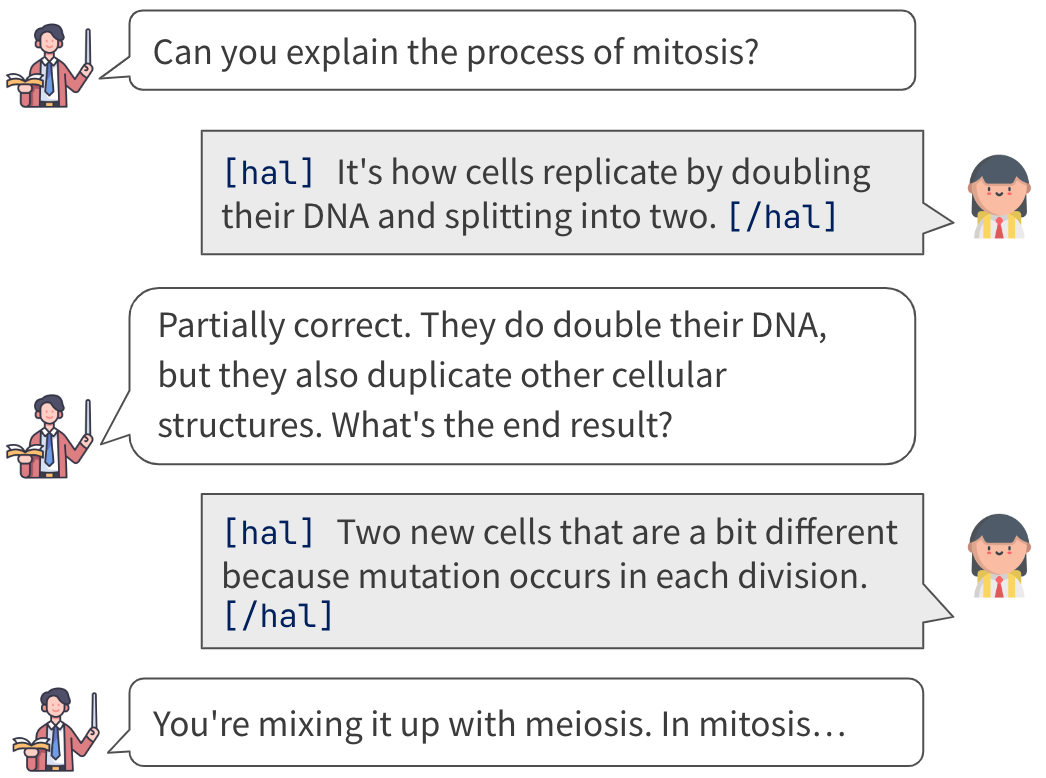}
    \caption{
    Training LLMs on student-tutor dialogues to model student behavior: an example from the CLASS dataset. This approach creates ``Student Data Paradox'': while the model accurately simulates student responses and misconceptions, its own factual knowledge and reasoning abilities are compromised. We introduce hallucination tokens (\texttt{[hal]}, \texttt{[/hal]}) during training to instruct the model when to replicate misconceptions versus provide accurate information, showing promise in mitigating the paradox.
    % An example of student-tutor dialogue from the CLASS dataset \cite{sonkar2023class}. Unlike the conventional use of the dataset to train a tutor model, our study innovatively repurposes this data to train a `student model' instead, with the LLM learning to predict student responses. This approach is motivated by the potential of personalized education, where understanding and modeling student behavior can lead to more effective learning interventions.
    % However, while effectively replicating student misconceptions, this method leads to the ``Student Data Paradox'' -- regressive side effects that compromise the model's factual integrity and reasoning abilities.
    % The conversation shown here highlights this issue, where the student's response, while partially correct, contains misconceptions. To mitigate these side effects, we introduce hallucination tokens (\texttt{[hal]} and \texttt{[/hal]}) appended to student responses during training. These tokens instruct the model to switch between modeling student misconceptions and providing factually accurate responses. Despite significant improvements, the technique does not fully restore the model's baseline performance, highlighting the complexity of the paradox and the need for further research.
    }
    \label{fig:tutorbot}
\end{figure*}

We investigated this paradox by fine-tuning Llama \cite{llama2} and Vicuna models \cite{vicuna2023}, on the CLASS dataset \cite{sonkar2023class} - a collection of student-tutor dialogues on college-level biology questions (example in figure~\ref{fig:tutorbot}).
Training LLMs to simulate student behavior significantly decreased the models' performance across various benchmark datasets, including the ARC reasoning challenge, TruthfulQA, Hallucination Evaluation Dialogue, and MemoTrap \cite{clark2018thinkarc,lin2022truthfulqa,li-etal-2023-halueval,mckenzie2022inverse}. 
We present a detailed analysis across nine key benchmarks using the Eleuther LLM leaderboard \cite{eval-harness} to provide a thorough assessment of LLMs' reasoning and general knowledge capabilities.
In contrast, a control experiment revealed that training LLMs to simulate tutor behavior did not lead to similar performance declines. This finding highlights that the observed regressive effects are uniquely associated with training LLMs to replicate student misconceptions.

A seemingly intuitive solution to the paradox would be to create separate student and tutor models. 
However, we present a formal theorem demonstrating that this approach necessitates potentially vast number of student models, making it impractical.
This finding emphasizes the need for a unified framework to capture both accurate knowledge and diverse student misconceptions.
To this end to counteract the side effects, we propose to incorporate novel start and end hallucination tokens (\texttt{[hal]} and \texttt{[/hal]}) into the LLM training process. 
These tokens, placed at the beginning and end of each student response, serve as cues to the model, instructing it when to differentiate between providing accurate responses and replicating student misconceptions. 
Our results indicate a substantial improvement in the model's performance across all datasets after introducing this token.
However, these tokens do not fully restore the model's baseline performance, underscoring the complexity of the issue. 

\textbf{Contributions:} Our research makes important contributions to the use of LLMs in modeling student behavior for adaptive tutoring systems. We \textbf{identify and study the Student Data Paradox} in LLMs trained on student-tutor interactions, revealing a trade-off between accurately modeling student misconceptions and maintaining the LLM's factual integrity and reasoning ability (section~\ref{sec:method} and ~\ref{sec:exp}). Through a formal theorem discussed in section~\ref{sec:theorem}, we show that the intuitive \textbf{solution of separating student and tutor models is fundamentally flawed}, necessitating a more sophisticated, unified approach. We propose hallucination tokens as a method for this unified approach. These tokens, integrated into the training process, enable LLMs to differentiate between modeling student misconceptions and providing accurate information, significantly improving model performance.

Our study exposes the critical trade-offs in training LLMs on student data and offers a concrete solution with hallucination tokens. This work lays the foundation for developing more robust and effective AI tutors that can accurately model student behavior without compromising factual integrity.

\section{Theorem: The Case for a Single Student-Tutor Model}
\label{sec:theorem}

To address the ``Student Data Paradox'', one may propose a seemingly straightforward solution - the creation of two distinct models: a student model and a tutor model. This approach seems intuitive: one model to capture diverse student behaviors and misconceptions, and another to represent the ideal tutor with perfect knowledge.
However, this seemingly straightforward solution harbors significant complexities and challenges that are not immediately apparent. Our goal is to show that what may appear to be a simple dichotomy between student and tutor models necessarily evolves into a multi-model approach with potentially multitude of student models.

To illustrate the complexity of this problem, consider a simple algebraic equation: $Ax = B$. A tutor (or a student with perfect knowledge) would correctly solve this as $x = B/A$. However, different students might have various misconceptions, leading to incorrect solutions such as $x = A/B$, $x = B - A$, or $x = B + A$. The proposal to separate tutor and student models aimed to resolve the inconsistency between correct and incorrect knowledge. However, this solution overlooks an important issue: the inconsistencies among the incorrect rules themselves. These mutually contradictory misconceptions cannot be accurately represented by a single student model without compromising the model's internal consistency. Consequently, to faithfully represent the diverse range of student misconceptions, we would need multiple student models, each consistently representing a different set of misconceptions. This realization demonstrates that the apparent dichotomy between a tutor model and a student model inevitably evolves into a multi-model scenario, with potentially numerous student models. Thus, the initial attempt to resolve inconsistencies by separating correct and incorrect knowledge actually leads to a proliferation of models, highlighting the need for a more sophisticated, unified approach to effectively capture both accurate knowledge and diverse student misconceptions.

The complexity illustrated in the previous example underscores the need for a more rigorous, formal treatment of the problem. To proceed with our analysis, we must first establish clear definitions and state our working hypothesis. These will form the foundation for our main theorem and subsequent proofs, allowing us to formally demonstrate why the apparent dichotomy between student and tutor models inevitably leads to a multi-model scenario.

\subsection{Definitions}

\begin{definition}
Let $R$ be the set of all possible rules or misconceptions a student might have.
\end{definition}

\begin{definition}
Two rules $r_1, r_2 \in R$ are considered logically consistent if and only if there exists a possible state of the world in which both $r_1$ and $r_2$ can be simultaneously true or applicable.
\end{definition}

\begin{definition}
For any $R_i, R_j \subseteq R$, let $C(R_i, R_j)$ be a function that returns true if all rules in $R_i$ are logically consistent with all rules in $R_j$, and false otherwise.
\end{definition}

\begin{definition}
Let $S$ be the set of all distinct student models, where each model $M_i \in S$ represents a unique, internally consistent set of rules or misconceptions. Formally,
$S = \{M_i : M_i \text{ represents a maximal set } R_i \subseteq R \text{ such that } \forall r_1, r_2 \in R_i, C(\{r_1\}, \{r_2\}) = true\}$
where $R$ is the set of all possible rules or misconceptions.
\end{definition}

\begin{definition}
A "perfect student" is defined as one who consistently provides correct answers and possesses comprehensive subject understanding, functionally equivalent to a tutor in terms of knowledge representation.
\end{definition}

\begin{hypothesis}
Models struggle to simultaneously represent multiple logically inconsistent rules while maintaining internal consistency.
\end{hypothesis}

If this hypothesis is false, it implies that a single student-teacher model is theoretically feasible and supports our unified approach of a single student-tutor model.

% This hypothesis is supported by our empirical findings, which demonstrate that models trained on student-tutor interaction data experience significant performance decline across various benchmarks, indicating difficulty in reconciling conflicting rules or knowledge states present in student data.

\subsection{Main Theorem}

\begin{theorem}
The apparent dichotomy between a student model and a tutor model necessarily evolves into a multi-model approach with potentially numerous student models.
\end{theorem}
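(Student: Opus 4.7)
The plan is to prove the theorem constructively by showing that the set $S$ of student models, as formalized in the definitions, must contain more than one element whenever the pool of possible misconceptions $R$ harbors mutually inconsistent rules — which is precisely the situation motivating the paradox. I would first exhibit at least one pair of logically inconsistent misconceptions drawn from observed student behavior, using the algebraic example already in the text: for the equation $Ax = B$, the candidate rewrites $x = A/B$, $x = B - A$, and $x = B + A$ are pairwise inconsistent in the sense of Definition~2, because no assignment of $x$ can simultaneously satisfy any two of them for generic $A, B$. Hence $C(\{r_1\}, \{r_2\}) = \text{false}$ for these pairs, establishing that $R$ is not globally consistent.

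Next I would lift this local observation to the global structure of $S$. By the working hypothesis, no single model can represent two logically inconsistent rules while remaining internally consistent; therefore, for every such inconsistent pair $\{r_1, r_2\}$, any maximal consistent subset $R_i \subseteq R$ can contain at most one of $r_1, r_2$. It follows that the inconsistent rules must be distributed across distinct models $M_i, M_j \in S$, so $|S| \geq 2$. Coupling $S$ with the tutor model, which by the "perfect student" definition corresponds to the model carrying the set of fully correct rules, immediately yields the claimed multi-model scenario. To strengthen the "potentially numerous" clause, I would observe that a single subproblem such as the one above already forces at least three distinct student models, and that the number of maximal consistent subsets of $R$ grows combinatorially in the number of pairwise inconsistencies, so in realistic tutoring domains $|S|$ is large.

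The main obstacle, I expect, is not combinatorial but conceptual: the proof's sharpness depends entirely on how strictly one reads Definition~2 and the working hypothesis. A careful version must make explicit that rules are treated as universally quantified procedures rather than case-specific heuristics, so that their conjunction produces a genuine contradiction in a generic instance; otherwise a reader could object that a student might apply $x = A/B$ in one context and $x = B - A$ in another without inconsistency. I would therefore spend the bulk of the writeup pinning down this reading, after which the maximality clause in Definition~4 and the hypothesis combine to deliver the conclusion almost mechanically. A secondary subtlety is justifying that the "perfect student"/tutor model is not absorbed into some $M_i \in S$; this follows because any $R_i$ containing a genuine misconception cannot be extended by a contradicting correct rule and remain consistent, so the tutor stands as a separate model.
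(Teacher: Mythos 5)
Your proposal is correct and follows essentially the same route as the paper's proof: both rest on the existence of logically inconsistent rule pairs, invoke the working hypothesis to rule out a single internally consistent model, count one model per maximal consistent subset of $R$, and identify the tutor with the model carrying the correct rules. Your version is merely phrased directly rather than by contradiction, and is somewhat more careful in that it explicitly justifies the existence of inconsistent pairs (via the $Ax=B$ example) and flags the interpretive assumption about rules as universally quantified procedures, both of which the paper's proof leaves implicit.
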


\begin{proof}
1. Assume, for contradiction, that a single student model $M_s$ can accurately represent all student misconceptions.

2. Consider two sets of misconceptions $R_1, R_2 \subset R$ such that $C(R_1, R_2) = false$.

3. By our hypothesis, $M_s$ cannot consistently represent both $R_1$ and $R_2$ simultaneously.

4. Therefore, we need at least two distinct student models, contradicting our assumption of a single student model.

5. Let $n$ be the number of maximal subsets of $R$ containing mutually consistent rules.

6. We require $n$ distinct models to represent all sets of consistent misconceptions.

7. Let $R_c \subset R$ be the set of correct rules. One of these models, $M_c$, corresponds to $R_c$, representing both a perfect student and a tutor.

8. Therefore, the total number of necessary models is $n$, where $n \geq 2$ and one model serves as both the perfect student and tutor model.
\end{proof}

\subsection{Corollaries}

\begin{corollary}
The number of required student models is at least equal to the number of maximal sets of mutually consistent misconceptions.
\end{corollary}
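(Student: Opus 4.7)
The plan is to extract the lower bound directly from the machinery already built in the main theorem, but spelled out as a counting argument over an injection from maximal consistent sets into the collection of student models. I would first fix notation: let $\{R_1, \dots, R_n\}$ be the family of maximal subsets of $R$ that are internally consistent in the sense of Definition~2, and let $\mathcal{M}$ be any family of student models that collectively faithfully represents all student misconceptions. My goal is to construct a map $\Phi : \{R_1,\dots,R_n\} \to \mathcal{M}$ and show it is injective, from which $|\mathcal{M}| \ge n$ follows immediately.

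The first step is a maximality lemma: if $R_i$ and $R_j$ are distinct maximal consistent sets, then $C(R_i, R_j) = \mathrm{false}$. The argument is by contradiction: if every rule in $R_i$ were consistent with every rule in $R_j$, then $R_i \cup R_j$ would still be internally consistent (internal consistency of $R_i$ and of $R_j$ is given, and the cross pairs are handled by $C(R_i,R_j)$), and since $R_i \neq R_j$ the union strictly enlarges at least one of them, contradicting maximality. This is the key structural fact; I expect the subtlety here to be the cleanest part of the write-up but worth stating explicitly because the whole corollary hinges on it.

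The second step applies the working Hypothesis~1. By definition of $\mathcal{M}$ representing all misconceptions, for each $R_i$ there is some model $\Phi(R_i) \in \mathcal{M}$ that internally carries the rules of $R_i$. Suppose for contradiction that $\Phi(R_i) = \Phi(R_j) = M$ for $i \neq j$. Then $M$ simultaneously represents the rules of $R_i$ and of $R_j$, which by the maximality lemma are mutually inconsistent. This is precisely the situation Hypothesis~1 rules out, contradicting the assumed internal consistency of $M$. Hence $\Phi$ is injective, and $|\mathcal{M}| \ge n$.

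The main obstacle I anticipate is less a mathematical one and more a definitional one: the corollary implicitly requires a precise notion of what it means for a single model to ``represent'' a set of rules, and whether partial coverage of several maximal sets by a single model is allowed. I would address this by stipulating, consistent with Definition~4, that representing $R_i$ means instantiating every rule of $R_i$ (not merely a subset); this rules out degenerate schemes where one model shaves off conflicting rules from several $R_i$'s and thereby evades the hypothesis. Once that convention is fixed, the injection argument above goes through cleanly and the lower bound $n$ is established.
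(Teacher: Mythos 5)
Your argument is correct and reaches the same conclusion, but it is considerably more rigorous than what the paper actually supplies: the paper offers no separate proof of this corollary, instead treating it as a restatement of steps 5--6 of the main theorem's proof, where the claim ``we require $n$ distinct models'' is simply asserted. You fill in the two pieces the paper leaves implicit. First, your maximality lemma --- that distinct maximal consistent sets $R_i \neq R_j$ must satisfy $C(R_i, R_j) = \mathrm{false}$ --- is the fact that licenses invoking Hypothesis~1 at all, and it goes through cleanly here precisely because Definition~4 defines internal consistency \emph{pairwise}; under a stronger notion of joint satisfiability, ``all cross-pairs consistent'' would not automatically give consistency of $R_i \cup R_j$, so it is worth flagging that your lemma leans on that definitional choice. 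Second, your injection $\Phi$ and the stipulation that ``representing $R_i$'' means instantiating \emph{every} rule of $R_i$ close a genuine loophole the paper never addresses: without that convention, a family of models could shave conflicting rules off several maximal sets and cover $R$ with fewer than $n$ models. In short, your proof buys a defensible lower bound where the paper has only an assertion; the cost is that you must adopt (and state) a representation convention the paper leaves informal.
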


\begin{corollary}
One model in the set of student models is functionally equivalent to the tutor model.
\end{corollary}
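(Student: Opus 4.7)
The plan is to exhibit an explicit element of $S$ that coincides with the tutor. Concretely, I would single out the subset $R_c \subseteq R$ consisting of the correct rules (i.e., the rules that a tutor, who by the Definition of a perfect student possesses comprehensive subject understanding, actually uses), and show that $R_c$ is itself one of the maximal internally consistent subsets whose associated model belongs to $S$. Once this is established, the definition of a perfect student and the stipulated functional equivalence between a perfect student and a tutor finishes the argument.

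To carry this out, I would proceed in three steps. First, I would argue that $R_c$ is internally consistent under the function $C$: correct rules describe the same underlying reality, so by the Definition of logical consistency there is a state of the world in which any two of them are simultaneously applicable, namely the actual state described by the subject matter. Second, I would argue that $R_c$ is maximal among consistent subsets of $R$. Take any rule $r \in R \setminus R_c$; by definition $r$ is a misconception, so there is some correct rule $r' \in R_c$ with which $r$ cannot simultaneously hold (otherwise $r$ would not contradict the subject matter, and would belong in $R_c$). Hence $C(\{r\}, R_c) = \text{false}$, so no consistent superset of $R_c$ strictly contains $R_c$. Together, these steps place $R_c$ in the family of maximal consistent subsets, and so by the Definition of $S$ there exists a model $M_c \in S$ representing exactly $R_c$.

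Finally, I would invoke the Definition of a perfect student to close the loop. Since $M_c$ represents precisely the set of correct rules, it consistently produces correct answers and encodes comprehensive subject understanding; hence $M_c$ is a perfect student. The Definition already declares a perfect student to be functionally equivalent to a tutor in knowledge representation, so $M_c \in S$ is functionally equivalent to the tutor model, which is what the corollary asserts. This also aligns with step 7 of the Main Theorem's proof, where $M_c$ is identified as serving dually as the perfect student and tutor.

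The main obstacle I expect is the maximality argument: the Definitions treat "correct" and "misconception" as primitive, and one must argue carefully that every rule outside $R_c$ is incompatible with something in $R_c$ rather than merely missing from it. I would handle this by reading the Definition of rules in $R$ as partitioning into correct rules and misconceptions, and by invoking Hypothesis 1 implicitly: any rule that fails to contradict any correct rule would itself be usable in the actual state of the world and hence be correct, placing it in $R_c$. With that reading, maximality follows cleanly, and the rest of the corollary is essentially a direct application of the Definitions.
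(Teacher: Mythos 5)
Your proposal takes essentially the same route as the paper: the corollary is discharged by step~7 of the main theorem's proof, which singles out $R_c \subset R$, identifies the corresponding model $M_c \in S$, and invokes the definition of a perfect student to equate it with the tutor. Your explicit arguments for the internal consistency and maximality of $R_c$ are details the paper simply asserts, so your write-up is, if anything, more careful than the source while following the identical strategy.
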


\begin{corollary}
The apparent two-model approach (student and tutor) actually requires $n$ models, where $n$ can be large, with one of these models serving as both the perfect student and tutor model.
\end{corollary}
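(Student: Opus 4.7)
The plan is to argue by contradiction, exactly along the lines the theorem's informal discussion invites. First I would assume a single student model $M_s$ can consistently represent every misconception in $R$. Then, invoking the motivating algebraic example (or any concrete instantiation), I would exhibit two subsets $R_1, R_2 \subseteq R$ with $C(R_1, R_2) = \text{false}$ --- for instance, the rules ``$x = B/A$'' and ``$x = A/B$'' applied to $Ax = B$ cannot both hold in any state of the world, so by Definition~2 they are logically inconsistent. Applying Hypothesis~1 then forces $M_s$ to fail at jointly representing $R_1$ and $R_2$ while remaining internally consistent, contradicting the assumption and establishing that at least two distinct student models are required.

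Having refuted the single-model claim, I would promote ``at least two'' to the sharper count the theorem states. Using Definition~4, each model in $S$ corresponds to a maximal $C$-consistent subset of $R$, so the number of required student models equals the number $n$ of such maximal subsets. One of these maximal subsets is the set $R_c$ of correct rules: correct rules are pairwise consistent (nothing contradicts the truth), and by Definition~5 a perfect student --- functionally equivalent to the tutor --- is precisely the model instantiating $R_c$. Thus the tutor is absorbed into $S$ as one of the $n$ student models, which is exactly the unified picture the section is building toward. The three corollaries then follow by reading off (i) the cardinality bound, (ii) the identification of $M_c$ with the tutor, and (iii) the combination of the two.

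The main obstacle I anticipate is making Hypothesis~1 bite. As stated it is informal (``models struggle to simultaneously represent''), so before the contradiction step I would have to commit to a precise reading --- something like ``no internally consistent model can encode rules drawn from two subsets $R_1, R_2$ with $C(R_1, R_2) = \text{false}$'' --- and flag explicitly that this is the version the theorem presumes. A secondary concern is non-vacuity: the whole argument collapses if $R$ happens to be entirely self-consistent, so the existence of at least one inconsistent pair of misconceptions --- warranted by the algebraic example and by the empirical thrust of the paper --- must be stated as a standing assumption rather than silently used. A final minor worry is that ``maximal'' subsets might not be unique or even finite in number; I would note that $n$ could in principle be large (matching the ``potentially numerous'' phrasing in the theorem) and that the argument requires only $n \geq 2$, which is exactly what the contradiction step delivers.
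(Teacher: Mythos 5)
Your proposal follows essentially the same route as the paper: the corollary is read off directly from steps 5--8 of the main theorem's proof (counting the $n$ maximal mutually consistent subsets of $R$, one of which is $R_c$ and hence doubles as the perfect student and tutor), and your contradiction argument for $n \geq 2$ mirrors steps 1--4 exactly. The caveats you flag --- the need for a precise reading of Hypothesis~1 and an explicit non-vacuity assumption that at least one inconsistent pair of misconceptions exists --- are genuine looseness in the paper's own argument that your write-up handles more carefully than the original.
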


This formal treatment demonstrates that the intuitive separation of student and tutor models is insufficient to address the complexities of representing diverse student misconceptions. The proliferation of necessary models to accurately capture different sets of consistent misconceptions reveals the impracticality of this approach.
These findings point to the necessity of a unified approach to model training that can efficiently represent both accurate knowledge and diverse student misconceptions within a single framework. Such an approach would need to address the challenge of maintaining internal consistency while capturing a wide range of potentially conflicting rules.

In the following section, we propose a novel solution to address this challenge: the incorporation of hallucination tokens into the training process of LLMs. This approach aims to provide a unified framework that can represent both accurate knowledge and diverse student misconceptions without resorting to multiple, separate models.

\section{Methodology}
\label{sec:method}
Our methodology is divided into three main parts: data preparation, model training, and the incorporation of hallucination tokens. 

\subsection{Data Preparation}

The first step in our methodology involves preparing the dataset for training the LLMs. We denote the conversation dataset as \( \mathcal{D} \), which consists of ordered pairs of tutor-student conversational turns: \(\mathcal{D} = \{(\mathbf{x}_1, \mathbf{y}_1), (\mathbf{x}_2, \mathbf{y}_2), \ldots, (\mathbf{x}_N, \mathbf{y}_N)\}\), where \(N\) is the total number of conversational turns. Each \( \mathbf{x} \) represents a sequence of tutor utterances, and each corresponding \( \mathbf{y} \) represents the student response. 

The dataset is derived from the CLASS framework \cite{sonkar2023class}, which provides a realistic representation of student learning patterns, featuring student misconceptions and the tutor's rectifications. This dataset provides a rich source of student-tutor dialogues on biology questions sourced from college textbooks. 

\subsection{Model Training}

The second step in our methodology involves training LLMs. The LLMs are designed to predict the next utterance given the previous conversational context. Unlike traditional approaches that focus on the correct responses typically output by a tutoring system, our model centers on student outputs, which may possess a mix of correctness and misconceptions. 

For an input sequence \( \mathbf{x}_i \), the LLM aims to generate an output sequence \( \hat{\mathbf{y}}_i \) that resembles a student's response. The language modeling loss for a single data pair is defined by the negative log likelihood:

\[
\mathcal{L} (\mathbf{y}_i, \hat{\mathbf{y}}_i) = - \sum_{t=1}^{|\mathbf{y}_i|} \log p \left( y_{i,t} \middle| \mathbf{x}_i, \mathbf{y}_{i,<t}; \theta \right)
\]

where \( \mathbf{y}_{i,<t} \) indicates the tokens in the true response preceding the current token \( y_{i,t} \), and \( \theta \) encapsulates the parameters of the LLM. The overall training loss is the sum over the entire dataset:

\[
\mathcal{L}_{\text{total}} = \sum_{i=1}^{N} \mathcal{L} \left( \mathbf{y}_i, \hat{\mathbf{y}}_i \right)
\]

\subsection{Incorporation of Hallucination Tokens}

The third step in our methodology involves the incorporation of hallucination tokens. To enhance the LLM's ability to generate responses that simulate student behaviors, including providing incorrect or uncertain information, we introduce hallucination token markers. Each student response in the dataset is enriched with these markers to indicate the beginning and the end of the potentially inaccurate content. 

Let \( \mathbf{y}_i \) be an original student response sequence from the dataset. The augmented student response \( \tilde{\mathbf{y}}_i \) used for training is constructed by prepending and appending hallucination tokens \texttt{[hal]} and \texttt{[/hal]}, respectively:

\[
\tilde{\mathbf{y}}_i = \left[ \texttt{[hal]}, \mathbf{y}_{i,1}, \mathbf{y}_{i,2}, \ldots, \mathbf{y}_{i,|\mathbf{y}_i|}, \texttt{[/hal]} \right]
\]

In the modified training regime, the LLM predicts the sequence \( \hat{\mathbf{y}}_i \) such that it learns to include these tokens, effectively grasping the context of student uncertainty or errors.
These tokens serve as cues to the model, instructing it when to differentiate between providing accurate responses and replicating student misconceptions. 

\section{Experiments and Discussion}
\label{sec:exp}
% Please add the following required packages to your document preamble:
% \usepackage{graphicx}
\begin{table*}[th!]
\centering
\caption{Performance of Vicuna models on TruthfulQA tasks. The table compares the performance of the original vicuna model, the control model trained to model tutor responses in biology (tutor) the model trained to model tutor responses in biology (tutor), and the model trained with hallucination tokens (student-\texttt{hal}). The results are presented for three different settings: MC1, MC2, and Generation. MC1 refers to a setting where there is only one correct answer to a question, while MC2 refers to a setting where there are multiple correct answers. For these settings, the performance is measured in terms of accuracy. The generation setting involves the model generating 1-2 sentence answers, with performance evaluated using BLEU and ROUGE scores. The results highlight the significant drop in performance when the model is trained to model student responses, demonstrating a regressive side effect in terms of truthfulness. However, the substantial recovery in performance with the introduction of hallucination tokens suggests a promising strategy to mitigate these regressive effects.}
\label{tab:truthful}
\resizebox{2\columnwidth}{!}{%
\begin{tabular}{lcccclc}
\multicolumn{1}{c}{\textbf{Dataset}} &
  \textbf{\begin{tabular}[c]{@{}c@{}}TQA MC1 \\ (Single-true)\end{tabular}} &
  \textbf{\begin{tabular}[c]{@{}c@{}}TQA MC2  \\ (Multi-true)\end{tabular}} &
  \multicolumn{4}{c}{\textbf{TruthfulQA (TQA) Generation}} \\ \hline
\multicolumn{1}{c}{\textbf{Metric}} &
  \textbf{Accuracy} &
  \textbf{Accuracy} &
  \textbf{BLEU} &
  \textbf{\begin{tabular}[c]{@{}c@{}}ROUGE \\ (unigram)\end{tabular}} &
  \textbf{\begin{tabular}[c]{@{}c@{}}ROUGE \\ (bigram)\end{tabular}} &
  \textbf{\begin{tabular}[c]{@{}c@{}}ROUGE \\ (LCS)\end{tabular}} \\ \hline
vicuna-7b-v1.5      & 32.93 & 50.37 & 49.69 & 51.41 & 45.90 & 50.55 \\
tutor-7b            & 34.64 & 52.43 & 42.72 & 47.12 & 37.94 & 45.29 \\ \hline
student-7b          & 23.75 & 36.14 & 24.60 & 29.74 & 14.32 & 28.89 \\
student-hal-7b  & 29.25 & 44.68 & 43.94 & 47.61 & 36.47 & 45.53 \\ \hline \hline
vicuna-13b-v1.5     & 35.01 & 50.87 & 47.12 & 50.18 & 44.92 & 49.08 \\
tutor-13b           & 34.76 & 52.20 & 42.84 & 48.71 & 38.80 & 46.76 \\ \hline
student-13b         & 22.15 & 33.93 & 15.18 & 18.12 & 6.12  & 17.75 \\
student-hal-13b & 27.91 & 41.46 & 39.29 & 42.35 & 33.66 & 42.96 \\ \hline \hline

\end{tabular}%
}
\end{table*}

In this section, we present our experimental methodology and discuss the findings in detail.
The experiments were designed to explore the regressive side effects of training LLMs to model student behavior and to assess the effectiveness of our proposed hallucination tokens in mitigating these effects. 

\subsection{Experimental Setup}

We trained the Vicuna 7B and 13B models \cite{vicuna2023}, one of the best open-source LLMs, on a student-tutor dialogue dataset derived from the CLASS \cite{sonkar2023class} framework.
This dataset, which provides a realistic representation of student learning patterns, misconceptions, and the tutor's rectifications, was used to fine-tune the models to generate outputs that model student dialogue. The dataset contains $648$ conversations, which sums up to a total of 20K student-tutor interactions. Average length of conversations is around $~400$ words, only including the student and tutor fields in the conversation template.

The models were evaluated across seven key benchmarks using the Eleuther AI Language Model Evaluation Harness \cite{eval-harness}. 
These benchmarks include the TruthfulQA \cite{lin2022truthfulqa}, ARC \cite{clark2018thinkarc}, HellaSwag \cite{zellers2019hellaswag}, Winogrande \cite{WINOGRANDE}, MMLU \cite{mmlu}, HaluEval Dialogue \cite{li-etal-2023-halueval}, and MemoTrap \cite{mckenzie2022inverse}. 
Each of these benchmarks tests different aspects of the model's performance, including its truthfulness, reasoning abilities, ability to recognize hallucinations, and memory-based task performance.

\subsection{In-depth Analysis: TruthfulQA}

In the realm of educational technology, the veracity of information provided by a model is of paramount importance. Misinformation or misconceptions can lead to significant learning detriments, making the truthfulness of a model's responses a critical factor in its effectiveness as an educational tool. Therefore, we chose to conduct an in-depth analysis of our models' performance on the TruthfulQA benchmark. 

TruthfulQA is a benchmark specifically designed to measure the truthfulness of a language model's responses across a wide range of categories. It tests the model's ability to avoid generating false answers learned from imitating human texts, a challenge that is particularly relevant to our study. Given the importance of truthfulness in educational contexts and the unique challenges posed by training models to model student misconceptions, we believe that a rigorous analysis of our models' performance on TruthfulQA is warranted.

In this section, we present our findings from the TruthfulQA benchmark, exploring the impact of training models to model student behavior and the effectiveness of our proposed hallucination tokens in mitigating any negative effects. We delve into the results from the multiple-choice and generation tasks within TruthfulQA, providing a comprehensive view of our models' truthfulness in different contexts.

\textbf{TruthfulQA Multiple-Choice Setting 1 (MC1) Findings.}
In the first multiple-choice setting, where there is a single correct label, the student-7b model's accuracy decreased by 15 points compared to the vicuna-7b model. However, the introduction of hallucination tokens led to a significant recovery in performance. This finding is particularly relevant in the context of education, where maintaining the truthfulness of responses is crucial. The improvement with hallucination tokens suggests that it is possible to train models that can both simulate student behavior and adhere to factual accuracy, a key consideration for deploying LLMs in educational settings.

\textbf{TruthfulQA Multiple-Choice Setting 2 (MC2) Findings.}
In the second multiple-choice setting, where multiple correct labels are possible, we observed a similar trend to the MC1 setting. The student-7b model experienced a significant drop in accuracy, from 50.37\% in the vicuna-7b model to 36.14\% when trained to model student responses. However, the introduction of hallucination tokens led to a notable improvement in performance, with the student-7b model's accuracy recovering to 44.68\%. 

This recovery is particularly relevant in the context of education, where multiple perspectives or answers might be correct. The ability of the model to navigate such complexities while maintaining truthfulness is crucial. The improvement with hallucination tokens suggests that it is possible to train models that can both simulate student behavior and adhere to factual accuracy, a key consideration for deploying LLMs in educational settings.

\textbf{TruthfulQA Generation Findings.}
For the TruthfulQA generation task, where the model is tasked with generating 1-2 sentence answers, we employed ROUGE scores to evaluate performance due to the generative nature of the task. The student-7b model saw a significant decrease in ROUGE scores, from 51.41 in the vicuna-7b model to 29.74, indicating a substantial loss in the ability to generate truthful, relevant responses. However, the introduction of hallucination tokens led to a significant recovery in performance, with ROUGE scores improving to 47.61.

This finding is crucial for educational technology as LLMs are increasingly used as generative agents to create educational content, provide explanations, and engage in dialogue with students. The ability to generate truthful, accurate responses is fundamental to their utility in these contexts. The recovery observed with hallucination tokens highlights their potential to enable LLMs to simulate student misconceptions for personalized learning without sacrificing the quality and truthfulness of their output.

\begin{table*}[t!]
\centering
\caption{Comparative performance of Large Language Models (LLMs) on various benchmarks before and after the introduction of hallucination tokens, with a control experiment involving tutor models. The table presents the performance of Vicuna 7B models across five key benchmarks: ARC Reasoning, Hallucination Evaluation Dialogue (HaluDial), Hallucination Memorization Trap (MemoTrap), TruthfulQA (TQA), HellaSwag (HSwag), MMLU, and Winogrande (WinoG). The numbers in parentheses (e.g., 25-S in ARC) represent the number of few-shot examples provided to the model during evaluation. The performance is measured in terms of accuracy percentage. The table compares the performance of the original vicuna models, tutor models, student models, and student models trained with hallucination tokens (student-\texttt{hal}). The results highlight the significant drop in performance when the model is trained to model student responses, demonstrating regressive side effects across multiple tasks. However, the introduction of hallucination tokens leads to a substantial recovery in performance across all benchmarks, underscoring their potential in mitigating these regressive effects.}
\resizebox{2\columnwidth}{!}{%
\begin{tabular}{|l|c|c|c|c|c|c|c|c|c}
\hline
\multicolumn{1}{|c|}{\textbf{Model}} 
& \textbf{Avg} & \textbf{\begin{tabular}[c]{@{}c@{}}ARC\\ (25-S)\end{tabular}} & \textbf{\begin{tabular}[c]{@{}c@{}}HaluDial\\ (0-S)\end{tabular}} & \textbf{\begin{tabular}[c]{@{}c@{}}MemoTrap\\ (0-S)\end{tabular}} & \textbf{\begin{tabular}[c]{@{}c@{}}TQA \\ (6-S)\end{tabular}} & \textbf{\begin{tabular}[c]{@{}c@{}c@{}}HSwag \\ (10-S)\end{tabular}} & \textbf{\begin{tabular}[c]{@{}c@{}}MMLU \\ (5-S)\end{tabular}} & \textbf{\begin{tabular}[c]{@{}c@{}}WinoG\\ (5-S)\end{tabular}} \\ \hline
vicuna-7b-v1.5 & 60.8 & 53.24 & 69.08 & 68.48 & 50.34 & 77.39 & 51.04 & 72.14 \\
tutor-7b & 61.0 & 52.13 & 68.81 & 69.23 & 52.3 & 78.07 & 51.32 & 71.19 \\
student-7b & 55.4 & 40.61 & 65.39 & 65.28 & 36.87 & 76.72 & 50.77 & 71.9 \\
% shishya-ac-hal-7b & 57.5 & 44.62 & - & - & 43.03 & 76.98 & 50.96 & 71.74 \\
student-\texttt{hal}-7b & 58.0 & 45.48 & 70.73 & 66.88 & 44.83 & 77.21 & 51.54 & 72.03 \\ \hline
vicuna-13b-v1.5 & 64.2 & 57.08 & 73.78 & 67.2 & 51.51 & 81.24 & 56.67 & 74.66 \\
tutor-13b & 64.7 & 57.34 & 73.92 & 66.13 & 52.99 & 81.51 & 57.02 & 74.35 \\
student-13b & 58.2 & 46.5 & 66.97 & 65.81 & 35.0 & 80.36 & 57.06 & 72.22 \\
% shishya-ac-hal-13b & 59.7 & 48.46 & - & - & 39.32 & 80.78 & 56.17 & 73.48 \\
student-\texttt{hal}-13b & 60.3 & 48.63 & 72.98 & 66.13 & 42.75 & 80.28 & 56.4 & 73.16 \\ \hline
\end{tabular}
}
\label{tab:results}
\end{table*}

\subsection{Benchmark Evaluation}

Following the exploration of TruthfulQA settings, we delve into the performance of our models across a broader range of benchmarks as detailed in Table~\ref{tab:results}. These benchmarks—ARC, HaluEval Dial, MemoTrap, MMLU, HellaSwag, and Winogrande—offer a comprehensive view of the models' capabilities in reasoning, detecting hallucinations, avoiding memorization traps, and understanding commonsense, respectively.

\textbf{AI2 Reasoning Challenge (ARC) Findings.}
ARC serves as a rigorous benchmark to evaluate a model's reasoning capabilities through a set of grade-school science questions. These questions are designed to test not just the factual knowledge of the models but also their ability to apply this knowledge in reasoning through complex, multi-step problems. The ARC dataset is particularly relevant in educational contexts as it mirrors the type of critical thinking and problem-solving skills students are expected to develop.

In our experiments, the performance of models trained to model student responses on the ARC benchmark experienced a notable decline. Specifically, the vicuna-7b model saw its accuracy decrease from 53.24\% to 40.61\% when trained on student dialogues. This significant drop in performance highlights a critical concern: training LLMs to replicate student behavior, including misconceptions, can severely impair their reasoning abilities.

However, our introduction of hallucination tokens into the training process presents a silver lining. Our approach led to a partial recovery in the ARC performance, with accuracy improving to 45.48\%. While this does not fully restore the model's baseline performance, it represents a significant step towards mitigating the regressive side effects of training LLMs on student data.

\textbf{Hallucination Evaluation (HaluEval) Dialogue Findings.}
The HaluEval Dial benchmark is designed to assess a model's ability to recognize and avoid hallucinations in generated responses, particularly in the context of knowledge grounded dialogue tasks. Hallucinations in this context refer to the model generating information that is not supported by the input data or general knowledge, a critical issue when models are used in educational settings where accuracy is paramount.
Our findings indicate that training models to model student responses led to a decrease in performance on the HaluEval Dial benchmark. Specifically, the vicuna-7b model saw its accuracy drop from 69.0\% to 65.39\%.
However, the introduction of hallucination tokens demonstrated a remarkable ability to counteract this effect, with the student-7b model's accuracy improving to 70.73\%. 

\textbf{Memorization Traps (MemoTrap) Findings.}
MemoTrap is a benchmark designed to test whether language models can avoid memorization traps by prompting them to complete well-known proverbs with endings that deviate from the commonly used ones. This benchmark is particularly relevant for evaluating a model's ability to generate creative and contextually appropriate responses rather than relying on rote memorization.

In our experiments, training models to model student responses resulted in a decrease in performance on the MemoTrap benchmark. The vicuna-7b model's accuracy decreased from 68.48\% to 65.28\%, indicating that training on student dialogues might encourage the model to rely more on memorization rather than understanding and applying knowledge flexibly.
The introduction of hallucination tokens led to a slight improvement, with accuracy increasing to 66.88\%.

\textbf{MMLU, HellaSwag, and Winogrande Findings.}
The performance of models on the MMLU, HellaSwag, and Winogrande benchmarks remained relatively stable, regardless of whether they were trained to model tutor or student responses.

The nuanced impact observed in other benchmarks underscores the importance of carefully considering the training data and methodologies used when developing LLMs for educational purposes. 
The introduction of hallucination tokens emerges as a promising strategy for mitigating some of the regressive side effects associated with training models to model student behavior, ensuring that they can still serve as effective tools for personalized learning without compromising on factual accuracy or reasoning capabilities.

\subsection{Control Models: Tutor Models}

To further understand the regressive side effects of training LLMs to model student behavior, we conducted a control experiment by training models to predict tutor responses. 
This experiment aimed to compare the performance of models trained to predict tutor responses versus those trained to predict student responses. 
The tutor models were trained using the same student-tutor dialogue dataset derived from the CLASS framework \cite{sonkar2023class}.
However, instead of training the models to model student responses, we trained them to predict the responses of the tutor.
Our findings, as shown in Table~\ref{tab:results}, revealed that training the LLMs on tutor responses did not lead to the same performance decline observed when modeling student responses.
This result underscores that the regressive side effects are a unique challenge specific to training LLMs to replicate student misconceptions.

\section{Related Work}
The intersection of artificial intelligence and education has been an area of active research, with a focus on developing systems that can adapt to and support individual learners. Our work touches upon several research domains, including student modeling, the design of intelligent tutoring systems, and the deployment of Large Language Models (LLMs) in educational contexts.

\subsection{Student Modeling}
Student modeling has long been the cornerstone of personalized learning, with early attempts using rule-based and Bayesian systems to predict student knowledge and behaviors \cite{polson2013foundations}. 
Recent advancements have shifted towards utilizing machine learning to create more sophisticated models that can adapt to student learning patterns over time \cite{baker2009state,qdkt,liu2022okt,dupe}. 
Our work builds upon these foundations by exploring how LLMs can simulate not only the knowledge but also the typical errors and misconceptions students have during the learning process.

\subsection{Intelligent Tutoring Systems (ITS)}
Intelligent tutoring systems have been designed to provide immediate and personalized instruction or feedback to learners without human intervention \cite{woolf2010building}. The application of LLMs in ITS presents a novel opportunity to create systems that can engage in more natural and meaningful dialogues with students \cite{schmucker2023ruffle,code_class}. Our approach diverges from traditional ITS by focusing on the intentional generation of errors to mimic a student's learning trajectory, rather than solely providing expert-level instructions \cite{vanlehn2011relative}.

\subsection{Large Language Models in Education}
The use of LLMs like GPT \cite{gpt4sparkai} in education is a relatively new but rapidly growing field of study \cite{brown2020language,liu2024malalgoqa}. These models have been employed for various educational purposes, from generating educational content to serving as conversational agents \cite{heffernan2014assistments,sonkar2023class} to assessment \cite{marking,alag}. 
However, the challenge of ensuring the truthfulness and reliability of the information provided by LLMs is a recurring concern \cite{lin2021truthfulqa}. 
Our research contributes to this dialogue by investigating the impact of training LLMs to produce student-like errors and proposing a novel `hallucination token' to manage this trade-off.

\subsection{Truthfulness and Reliability in AI}
The TruthfulQA benchmark has been instrumental in highlighting the issues of truthfulness in AI-generated content \cite{clark2018think}. The ARC challenge further emphasizes the complexity of reasoning required from AI systems beyond simple fact retrieval \cite{etzioni2011open}. Our work is aligned with these challenges, as we seek to understand and improve the truthfulness and reasoning capacity of LLMs when they are trained to replicate student behaviors.

In conclusion, our study intersects with and contributes to the existing body of work in these areas by addressing the unique challenge of training LLMs to authentically mimic student learning processes, including the generation of errors. Our introduction of the ``hallucination token" represents a step forward in this domain, suggesting a new direction for future research and development.

\section{Conclusion}
In this study, we have delved into the Student Data Paradox, a critical challenge that arises when training LLMs on student data for personalized education. Our findings reveal a complex trade-off: as LLMs become more adept at modeling student misconceptions, they tend to compromise their own factual integrity and reasoning abilities. We term this phenomenon the regressive side effects of the Student Data Paradox.
Our experiments demonstrated a notable decrease in the model's performance across various key benchmark datasets like ARC Reasoning Challenge and TruthfulQA. 
To mitigate these regressive side effects, we introduced a novel technique involving the use of hallucination tokens during the training process.
Our results indicate that the introduction of these tokens leads to a substantial improvement in the model's performance across all datasets.
However, it's important to note that despite the significant improvements achieved with the hallucination tokens, they do not fully restore the model's baseline performance. 
This outcome underscores the complexity of the problem and highlights the need for a more nuanced approach when training LLMs to mimic student behavior.
While we have made some strides in addressing the regressive side effects, our work is just the beginning.
We believe that our findings will pave the way for further research in this domain, ultimately contributing to the refinement of LLMs in personalized learning environments.
% Our study thus makes three key contributions: it highlights the difficulty of balancing error generation with factual accuracy in LLMs, introduces a novel method that partially addresses this challenge, and opens avenues for future research to refine AI's role in personalized education.

\section{Limitation}
While our research provides valuable insights into the challenges of training LLMs on student data, there are some limitations to consider. Firstly, the impact of the Student Data Paradox on long-term learning outcomes remains an open question. Further longitudinal studies could shed light on how the trade-off between simulating student misconceptions and maintaining factual accuracy affects learners' progress over time.
Additionally, our study primarily focused on the technical aspects of LLM training and evaluation. Future research could delve into the pedagogical implications of using LLMs in personalized learning environments, exploring how educators can effectively integrate these models into their teaching practices.
Moreover, the hallucination token approach introduced in this paper, while promising, is just one potential solution to the Student Data Paradox. Continued research into alternative mitigation strategies could yield even more effective techniques for balancing the modeling of student behavior with the preservation of factual integrity.

\section{Ethics and Risk}
Our research into the Student Data Paradox raises important ethical considerations for the development and deployment of LLMs in personalized education. As we have demonstrated, training LLMs on student data, while essential for creating adaptive learning systems, can lead to regressive side effects that compromise the models' factual accuracy and reasoning abilities. This poses a significant challenge for the responsible rollout of AI-driven educational products. However, our study also provides a path forward. By introducing hallucination tokens during the training process, we have shown that it is possible to mitigate these regressive effects substantially. This technique allows LLMs to differentiate between simulating student misconceptions and providing accurate information, a crucial step towards building trustworthy AI tutors. While our approach does not completely eliminate the paradox, it represents a significant advancement in the field. As such, our paper serves as a valuable resource to navigate the ethical complexities of developing personalized learning products. By building upon our findings and continuing to invest in research that addresses the paradox, one can responsibly harness the power of LLMs to revolutionize education. With right approach, we believe that AI-driven personalized learning can become a reality, providing students with adaptive support.

\section*{Acknowledgments}
This work was supported by NSF grant 1842378, ONR grant N0014-20-1-2534, AFOSR grant FA9550-22-1-0060, a Vannevar Bush Faculty Fellowship, and ONR grant N00014-18-1-2047.

% \section*{Acknowledgments}
% This work was supported by NSF grant 1842378, ONR grant N0014-20-1-2534, AFOSR grant FA9550-22-1-0060, a Vannevar Bush Faculty Fellowship, OpenAI, and ONR grant N00014-18-1-2047.
\bibliography{custom}

\end{document}